\pgfplotsset{compat=1.18} % 设置 pgfplots 版本以确保兼容性
\definecolor{myblue}{HTML}{4A90E2}
\definecolor{myorange}{HTML}{F5A623}
\definecolor{mygreen}{HTML}{35F5C3}
\definecolor{codegray}{rgb}{0.5,0.5,0.5}
\definecolor{codepurple}{rgb}{0.58,0,0.82}
\definecolor{codeblue}{rgb}{0,0,0.6}
\definecolor{codegreen}{rgb}{0.1,0.5,0.1}
\lstdefinestyle{compactstyle}{
    backgroundcolor=\color{white},   
    commentstyle=\color{codegreen},
    keywordstyle=\color{codeblue}\bfseries,
    numberstyle=\tiny\color{codegray},
    stringstyle=\color{codepurple},
    basicstyle=\ttfamily\footnotesize, % Use footnotesize for font
    breakatwhitespace=false,         
    breaklines=true,                 
    captionpos=b,                    
    keepspaces=true,                 
    numbers=left,                    
    numbersep=4pt,                   
    showspaces=false,                
    showstringspaces=false,
    showtabs=false,
    frame=single,                    % Add a single frame
    rulecolor=\color{black!30},
    framerule=0.5pt,
    framesep=3pt,                    % Padding inside the frame
    xleftmargin=2mm,                 % Margin outside the frame
}
\newtheorem{proposition}{Proposition}
\newcommand{\gain}[1]{\,\textcolor{teal}{(+#1)}} % 性能提升 (绿色)
\newcommand{\dq}[1]{``#1''}
\theoremstyle{definition}
\newtheorem*{definition*}{Definition}
\newcommand{\E}{\mathbb{E}}
\newcommand{\gH}{g(H_t^{(\tau)})}
\newcommand{\fH}{f(H_{t+1}^{(\tau)})}
\newcommand{\grad}{\nabla_\theta}
\newcommand{\logpi}{\log \pi_\theta(a_t|s_t)}
\title{Harnessing Uncertainty: Entropy-Modulated Policy Gradients for Long-Horizon LLM Agents}
\author[\S]{Jiawei Wang}
\author[\S]{Jiacai Liu}
\author[\S]{Yuqian Fu}
\author[]{Yingru Li}
\author[\S]{Xintao Wang}
\author[\S]{Yuan Lin}
\author[\S]{Yu Yue}
\author[]{Lin Zhang}
\author[\S,\dagger]{Yang Wang}
\author[\S,\dagger]{Ke Wang}
\affiliation[]{ByteDance}
\abstract{
%Large Language Model (LLM) agents face a significant challenge in long-horizon tasks due to sparse, outcome-based rewards that make credit assignment for intermediate steps difficult. 
In long-horizon tasks, recent agents based on Large Language Models (LLMs) face a significant challenge that sparse, outcome-based rewards make it difficult to assign credit to intermediate steps. Previous methods mainly focus on creating dense reward signals to guide learning, either through traditional reinforcement learning techniques like inverse reinforcement learning or by using Process Reward Models for step-by-step feedback.
In this paper, we identify a fundamental problem in the learning dynamics of LLMs: the magnitude of policy gradients is inherently coupled with the entropy, which leads to inefficient small updates for confident correct actions and potentially destabilizes large updates for uncertain ones. To resolve this, we propose Entropy-Modulated Policy Gradients (EMPG), a framework that re-calibrates the learning signal based on step-wise uncertainty and the final task outcome. EMPG amplifies updates for confident correct actions, penalizes confident errors, and attenuates updates from uncertain steps to stabilize exploration. We further introduce a bonus term for future clarity that encourages agents to find more predictable solution paths. Through comprehensive experiments on three challenging agent tasks, WebShop, ALFWorld, and Deep Search, we demonstrate that EMPG achieves substantial performance gains and significantly outperforms strong policy gradient baselines.
% \citep{li2024llava}
}
\date{\today}
\begin{document}
\maketitle

%不需要目录就注释掉 注意目录不要和第一页放在一块 要有\newpage
%\newpage
%\tableofcontents
%\newpage

\section{Introduction}

The advent of Large Language Models (LLMs) has catalyzed the development of autonomous agents that are capable of tackling complex, multi-step tasks \citep{wei2022chain, yao2023react}. However, a fundamental challenge persists in training these agents for long-horizon tasks: the sparsity of outcome-based rewards. 
In many realistic scenarios, such as web navigation \citep{yao2022webshop}, software engineering \cite{zhang2024codeagent}, and deep search \citep{alzubi2025open}, feedback is only available at the end of the complete generation. This makes it difficult to assign appropriate credit for standard reinforcement learning (RL) algorithms to discern the crucial intermediate steps.

To solve the problem of the sparse reward challenge, prior work has explored two primary directions: implicit reward guidance and explicit step-wise supervision. The first involves traditional reinforcement learning techniques aimed at creating densified reward signals. Methods like reward shaping \citep{ng1999policy}, intrinsic motivation based on state novelty or curiosity \citep{bellemare2016unifying, pathak2017curiosity}, and inverse reinforcement learning \citep{ziebart2008maximum,deng2024novice} attempt to estimate the value of intermediate actions. However, these approaches often struggle to scale. They are either computationally prohibitive, ill-suited for the vast, combinatorially complex state and action spaces inherent to LLM-driven agent tasks, or heavily reliant on human prior knowledge. The second line of research, particularly successful in structured reasoning domains, employs Process Reward Models (PRMs) \citep{lightman2023let} to provide step-by-step feedback. Yet, PRMs suffer from significant drawbacks: they demand prohibitive human annotation costs to build, are susceptible to noise when trained on synthetic data, and often exhibit poor generalization to out-of-distribution problems. These limitations are exacerbated in complex, interactive agent tasks where defining a single "correct" step is itself a non-trivial, context-dependent challenge, making the application of PRMs impractical.

Policy entropy is a cornerstone concept in RL, traditionally used to balance the exploration-exploitation trade-off. Recently, it has been repurposed as a direct learning signal in LLM reasoning tasks, where minimizing entropy is used as an unsupervised objective to increase the model's certainty \citep{gao2025one, agarwal2025unreasonable}. While effective in some contexts, this approach is vulnerable to the critical issue of "hallucinated confidence," where the model becomes confidently incorrect \cite{zhang2025siren}. More recent efforts use entropy not as a reward, but as a modulator. For instance, Seed-GRPO \citep{chen2025seed} leverages semantic uncertainty to down-weight the advantage of high-entropy responses in mathematical reasoning, while \citet{cheng2025reasoning} proposes shaping the advantage function with token-level entropy to improve long-form generation. However, these efforts are restricted to single-turn, generative reasoning tasks, operating at either the token-level or response-level. 
It remains underexplored how to leverage agents' intrinsic uncertainty for credit assignment in long-horizon, multi-step decision-making.

\begin{figure}[t]
\centering
\includegraphics[width=1.0\textwidth]{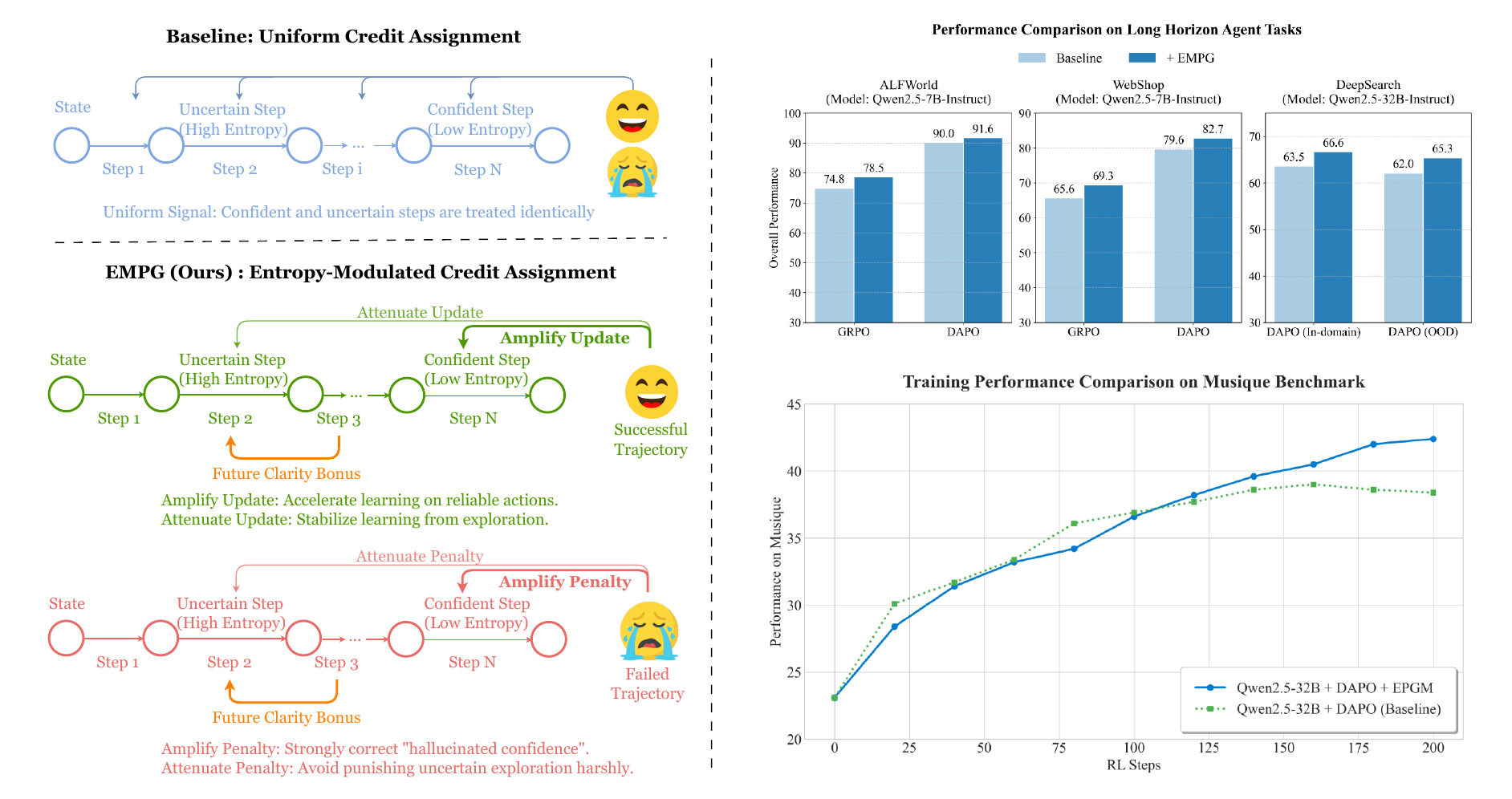}
\caption{Overview of the EMPG mechanism and its algorithm performance. \textbf{Left:} Conceptual diagram contrasting the uniform credit assignment of baseline methods with EMPG's confidence-modulated signal. \textbf{Right:} Final performance comparison on key long-horizon benchmarks showing EMPG's superiority, along with the training dynamics on Musique that highlight its ability to achieve sustained improvement and avoid the baseline's performance plateau.}
\label{fig:overview}
\end{figure}

Our work begins by analyzing the fundamental dynamics of the policy gradient itself. We formally show that for a standard softmax policy, the expected norm of the score function is a monotonic function of the policy's entropy (Proposition \ref{prop:expected_grad_norm}). In simple terms, high-entropy (uncertain) actions naturally produce large gradients, while low-entropy (confident) actions produce small ones. This inherent behavior presents a dual challenge for learning: 1) confident and correct steps, which should be strongly reinforced, receive small updates, limiting learning speed, and 2) uncertain exploratory steps can introduce large, noisy gradients that destabilize training. This reveals a critical need to explicitly re-calibrate the learning signal based on an action's uncertainty.

To address this, we propose Entropy-Modulated Policy Gradients (EMPG), a framework that reshapes the learning landscape by directly adapting to this dynamic, as illustrated in Figure \ref{fig:overview}. Instead of naively rewarding low entropy, EMPG introduces \textit{Self-Calibrating Gradient Scaling} mechanism, which dynamically modulates the policy gradient based on step-wise uncertainty: \textit{1) for confident and correct actions}, it amplifies the updates, while \textit{2) for uncertain steps}, it attenuates updates to ensure stable exploration.
%it amplifies updates for confident and correct actions, strongly penalizes confident but incorrect ones, and attenuates updates for uncertain steps to promote stable exploration. 
Furthermore, to encourage agents to find predictable solution paths, EMPG introduces \textit{\dq{future clarity}}, an additional bonus term in the advantage function that provides an intrinsic signal for actions that lead to less uncertain subsequent states. 
This guides agents to perform purposeful exploration, steering them away from chaotic or unpromising high-entropy trajectories toward states with greater clarity about the next steps. This dual approach enables EMPG to forge a dense, informative, and well-calibrated learning signal from sparse external feedback. To validate our framework, we conduct extensive experiments on challenging long-horizon agent benchmarks such as WebShop \cite{yao2022webshop}, ALFWorld \cite{shridhar2021alfworld}, and Deep Search \cite{alzubi2025open}, demonstrating the effectiveness and scalability of our approach across models of various sizes.

% Our key contributions are as follows:
% \begin{itemize}
%     \item We propose Entropy-Modulated Policy Gradients (EMPG), a novel framework that modulates the learning signal based on step-wise confidence. This mechanism accelerates learning from reliable actions while strongly penalizing confident errors, directly combating the "hallucinated confidence" problem.
%     \item We introduce a "future clarity" intrinsic bonus that guides the agent toward more predictable and stable decision pathways by encouraging it to reduce uncertainty in subsequent states.
%     \item We demonstrate through comprehensive experiments that our method achieves substantial performance gains on challenging long-horizon agent benchmarks, including WebShop, ALFWorld, and Deep Search, significantly outperforming strong baselines like GRPO \cite{shao2024deepseekmath} and DAPO \cite{yu2025dapo}.
% \end{itemize}

Our key contributions are as follows:
\begin{itemize}
    \item We first identify and formalize a fundamental challenge in policy gradient methods: the inherent coupling of gradient magnitude and policy entropy. This dynamic leads to inefficient learning for confident actions and instability from uncertain ones, motivating the need for explicit signal re-calibration.
    
    \item We introduce Entropy-Modulated Policy Gradients, a framework designed to solve this problem. EMPG combines \textit{Self-Calibrating Gradient Scaling} to correct the flawed gradient dynamics with a \textit{Future Clarity Bonus} to promote exploration towards more predictable states.
    
    \item Extensive experiments on demanding agent tasks (WebShop, ALFWorld, Deep Search) show that EMPG substantially outperforms strong baselines like GRPO and DAPO.
\end{itemize}
\section{Related Work}

\subsection{LLM-based Autonomous Agents}
% LLM Agents -> Agentic RL -> Long Horizon Training Problem (sparse reward)

The advent of LLMs has catalyzed the development of sophisticated autonomous agents capable of performing complex, multi-step tasks that were previously unattainable. 
Specialized agents have been designed for diverse applications, including software development (e.g., coding agents \cite{jimenez2023swe,zhang2024codeagent}), information retrieval (search agents \cite{he2025pasa,li2025search}), and complex web interactions (browser-use agents \cite{yao2022webshop,deng2023mind2web,yan2023gpt}). For training these agentic models, reinforcement learning has proven to be a powerful and essential paradigm. Recent research on RL-based agents, such as Search-R1 \cite{jin2025search}, SWE-RL \cite{wei2025swe}, and WebAgent-R1 \cite{wei2025webagent}, has demonstrated that RL can effectively enhance agent performance and enable learning in highly interactive and dynamic environments. Despite these successes, a fundamental problem remains to be fully addressed: the difficulty of credit assignment in long-horizon tasks. The multi-step nature of these problems, where a reward signal is often only available upon completion, hinders the efficiency and stability of the training process.

\subsection{Reinforcement Learning from Internal Feedback}
% Unsupervised RL, Self-Consistency or Uncertainty -> Entropy for Exploration -> "EDGE-GRPO: Entropy-Driven GRPO with Guided Error Correction for Advantage Diversity" recent work has a similar approach to us, but with a different motivation. Also, they focused on math reasoning instead of the agent task.

To overcome the challenges of sparse external rewards, recent studies have explored using internal feedback, generated by the model itself, to create denser training signals. This approach often leverages unsupervised signals derived from model uncertainty \cite{zhang2025right,agarwal2025unreasonable,zhao2025learning} or self-consistency \cite{zuo2025ttrl,zhang2025consistent}, frequently quantified by policy entropy. However, the role of entropy has been interpreted in conflicting ways. Some studies argue that correct responses typically exhibit lower entropy, thus proposing unsupervised entropy minimization as a method to improve performance \cite{gao2025one,agarwal2025unreasonable}. Conversely, other works suggest that high entropy encourages exploratory reasoning. For instance, SEED-GRPO \cite{chen2025seed} uses semantic entropy to modulate policy updates for diversity, while others explicitly incorporate policy entropy into the advantage term to promote exploration \citep{cheng2025reasoning,vanlioglu2025entropy}.
Recently, EDGE-GRPO \citep{zhang2025edge} proposes entropy modulation in single-turn mathematical reasoning.
Similar to our method, they modulate policy gradients by amplifying updates for confident correct responses and attenuating updates for incorrect or uncertain ones.  
However, EMPG fundamentally differs from EDGE-GRPO in both motivation and scope:
First, while EDGE-GRPO focuses on correcting confidence misalignment within a single-turn mathematical reasoning,
EMPG is specifically designed for the multi-step credit assignment problem in long-horizon tasks. Second, towards the challenges in multi-turn long-horizon tasks, EMPG dynamically assigns credit across the entire trajectory to amplify the crucial steps.

\clearpage
%A recent work, EDGE-GRPO \cite{zhang2025edge}, adopts a similar entropy-driven mechanism to ours. It modulates policy gradients to amplify updates for confident and correct responses while attenuating updates for incorrect or uncertain ones, and demonstrated its effectiveness on a mathematical reasoning task. 
%However, our work fundamentally differs in both its motivation and scope. 
%While EDGE-GRPO focuses on correcting confidence misalignment within a single-turn generative task like mathematical reasoning, our method, EMPG, is specifically designed to solve the multi-step credit assignment problem in long-horizon autonomous agent tasks. Our approach not only aligns confidence with correctness but also dynamically assigns credit across an entire trajectory, amplifying crucial steps and attenuating noisy or uncertain ones, a challenge not addressed by single-turn methods.
\section{Preliminaries}
\label{sec:Preliminaries}

\subsection{Policy Optimization in Reinforcement Learning}
Our work is grounded in policy gradient methods, which seek to optimize a policy $\pi_\theta$ parameterized by $\theta$ to maximize the expected reward objective:
\begin{equation}
    \mathcal{J}(\pi_\theta) := \mathbb{E}_{\tau \sim \pi_\theta}[R(\tau)]
\end{equation}
where $\tau$ is a trajectory sampled under policy $\pi_\theta$ and $R(\tau)$ is its total return. The policy gradient theorem allows for direct optimization of this objective via gradient ascent. The gradient is estimated as an expectation over trajectories:
\begin{equation}
    \nabla_\theta \mathcal{J}(\pi_\theta) = \mathbb{E}_{\tau \sim \pi_\theta} \left[ \sum_{t=0}^{T} A(s_t, a_t) \nabla_\theta \log \pi_\theta(a_t|s_t) \right]
\end{equation}
where $s_t$ and $a_t$ are the state and action at time step t, respectively.

A key challenge in estimating this gradient is its inherently high variance. To mitigate this, an advantage function, $A(s_t, a_t)$, is used to measure the relative quality of an action. This advantage is typically estimated using a learned value model, which predicts the expected return from a given state \citep{schulman2017proximal}. However, this approach has significant drawbacks. The value model is often comparable in size to the policy model, introducing substantial memory and computational overhead. Furthermore, the effectiveness of the algorithm hinges on the reliability of its value estimates, which are inherently difficult to learn accurately \cite{liu2024improving,kazemnejad2024vineppo}, especially for complex tasks with long response horizons. Due to these challenges, value-free methods, which estimate the advantage directly from sampled trajectories without a learned value function, have become increasingly popular \citep{shao2024deepseekmath, yu2025dapo}. Our work is also grounded in this value-free paradigm, foregoing a value model to improve training efficiency and stability.

\subsection{RL Framework for Long-Horizon Agent Tasks}
We formalize the long-horizon task as a standard reinforcement learning problem. An LLM agent interacts with an environment over a trajectory $\tau = (s_0, a_0, r_0, ..., s_T, a_T, r_T)$. The reward signal is sparse, with $r_t=0$ for all non-terminal steps. Assuming an undiscounted setting ($\gamma=1$), the trajectory return $R(\tau)$ is thus determined solely by the final outcome:
\begin{equation}
    R(\tau) = \sum_{t=0}^{T} \gamma^t r_t = r_T \in \{0, 1\}
\end{equation}
In our work, a single step corresponds to a complete "reason-then-act" cycle (e.g., as in ReAct \citep{yao2023react}), forming a multi-step decision-making process. This sparse-reward, long-horizon setting epitomizes two fundamental RL challenges: the \textbf{credit assignment problem} and the \textbf{exploration problem}.

\subsection{Strategies for Learning from Sparse Outcome-Based Rewards}
To enable effective learning from sparse, outcome-based rewards in long-horizon tasks, several powerful strategies have emerged that form the foundation of modern LLM RL.

\begin{itemize}
    \item \textbf{Trust Region Learning.} Proximal Policy Optimization (PPO) \citep{schulman2017proximal} serves as the bedrock algorithm. Its primary innovation is not credit assignment, but ensuring training stability. It achieves this by constraining policy updates within a trust region, using a clipped objective on the probability ratio $\rho_t(\theta) = \frac{\pi_\theta(a_t|s_t)}{\pi_{\theta_{\text{old}}}(a_t|s_t)}$. When applied to sparse reward tasks, PPO's effectiveness fundamentally depends on the quality of its advantage estimates, which implicitly perform the task of credit assignment \cite{kazemnejad2024vineppo}.

    \item \textbf{Group-Based Advantage Estimation.} Group Relative Policy Optimization (GRPO) \citep{shao2024deepseekmath} builds upon this foundation with a direct solution for credit assignment. It addresses the high variance of the policy gradient inherent in sparse rewards by sampling multiple responses ($M$) and computing a Z-score-like advantage:
    \begin{equation}
        A_{ij} = \frac{r(x_i, y_{ij}) - \text{mean}_{k=1}^M(r(x_i, y_{ik}))}{\text{std}_{k=1}^M(r(x_i, y_{ik})) + \epsilon}
    \end{equation}
    Here, $r(x_i, y_{ij})$ is the final outcome-based reward for the j-th response, and $\epsilon$ is a small constant added for numerical stability. This comparative evaluation effectively identifies the best-in-batch responses, providing a robust signal.
    
    \item \textbf{Adaptive Data Curation.} Decoupled Clip and Dynamic Sampling Policy Optimization (DAPO) \citep{yu2025dapo} further refines the learning process by curating the data itself. It addresses failure modes in GRPO by filtering and resampling trajectories to form more informative training batches. By focusing updates on a buffer of high-quality samples, it improves the efficiency of learning from the sparse reward signal.
\end{itemize}

While powerful, these strategies share a common reliance on processing external, outcome-based reward signals. As they are primarily designed for single-turn generation, they treat entire action sequences as monolithic blocks. When applied to interactive agent tasks, this leads to a coarse, trajectory-level credit assignment that fails to pinpoint which specific actions in a long sequence were critical for success. This approach ignores the rich, intrinsic signals available at each step of the generative process. Our work diverges by proposing a new paradigm that peers inside the model, leveraging its intrinsic, step-wise uncertainty.

\subsection{Theoretical Motivation: A Two-Part Re-Calibration of Policy Gradients}
\label{sec:theoretical_motivation}
Our approach is motivated by a fundamental analysis of the relationship between a policy's gradient and its predictive uncertainty. Standard policy gradients, while effective, possess an inherent dynamic that can hinder stable and efficient learning. Specifically, the magnitude of the gradient is inherently coupled with the policy's entropy, often leading to inefficiently small updates for confident actions and potentially destabilizing large updates for uncertain ones. This dynamic, which we aim to re-calibrate, is formally characterized by the following proposition.

\begin{proposition}
\label{prop:expected_grad_norm}
For a policy $\pi_\theta$ parameterized by a softmax over logits $z_\theta(s)$, the expected squared L2-norm of the score function $\nabla_{z_\theta} \log \pi_\theta(a|s)$ with respect to the logits is a direct function of the policy's Rényi-2 entropy \cite{renyi1961measures}, $H_2(\pi)$:
\begin{equation}
\label{eq:grad_norm_entropy}
    \mathbb{E}_{a \sim \pi_\theta(\cdot|s)} \left[ ||\nabla_{z_\theta (s)} \log \pi_\theta(a|s)||^2 \right] = 1 - \exp(-H_2(\pi))
\end{equation}
A detailed proof is provided in Appendix \ref{app:proof}.
\end{proposition}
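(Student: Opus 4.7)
The plan is to reduce the left-hand side to a simple function of the collision probability $\sum_{a'} \pi_\theta(a'|s)^2$, then invoke the definition of the R\'enyi-2 entropy to finish. I would first compute the score function in closed form for a softmax parameterization. Writing $\pi_\theta(a|s) = \exp(z_\theta(s)_a)/\sum_{a'} \exp(z_\theta(s)_{a'})$ and differentiating $\log \pi_\theta(a|s)$ with respect to the logit vector $z_\theta(s)$, a direct calculation gives the well-known identity $\nabla_{z_\theta(s)} \log \pi_\theta(a|s) = e_a - \pi_\theta(\cdot|s)$, where $e_a$ is the one-hot indicator of action $a$ and $\pi_\theta(\cdot|s)$ is the probability vector over actions.

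Next, I would expand the squared L2-norm. Since $e_a$ and $\pi_\theta(\cdot|s)$ are both vectors over the action set, $\|e_a - \pi_\theta(\cdot|s)\|^2 = 1 - 2\pi_\theta(a|s) + \sum_{a'} \pi_\theta(a'|s)^2$, using $\|e_a\|^2 = 1$ and $\langle e_a, \pi_\theta(\cdot|s)\rangle = \pi_\theta(a|s)$. Taking the expectation over $a \sim \pi_\theta(\cdot|s)$ collapses the middle term into $-2\sum_a \pi_\theta(a|s)^2$, while the last term is independent of $a$ and therefore stays as $\sum_{a'} \pi_\theta(a'|s)^2$. Combining yields
\begin{equation*}
    \mathbb{E}_{a \sim \pi_\theta(\cdot|s)}\bigl[\|\nabla_{z_\theta(s)} \log \pi_\theta(a|s)\|^2\bigr] = 1 - \sum_{a} \pi_\theta(a|s)^2.
\end{equation*}

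Finally, I would invoke the definition of the R\'enyi-2 entropy, $H_2(\pi) = -\log \sum_a \pi_\theta(a|s)^2$, so that $\sum_a \pi_\theta(a|s)^2 = \exp(-H_2(\pi))$. Substituting into the previous display gives exactly equation~\eqref{eq:grad_norm_entropy}. The whole argument is essentially algebraic once the softmax score identity $e_a - \pi$ is in hand; no concentration or inequality is needed, and there is no genuine obstacle beyond keeping track of which sums depend on the sampled action. If any step warrants care, it is the index bookkeeping when splitting $\|e_a - \pi\|^2$ into the $a' = a$ and $a' \neq a$ contributions, but this is a routine manipulation rather than a real difficulty.
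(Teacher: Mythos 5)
Your proposal is correct and follows essentially the same route as the paper's proof: compute the softmax score function componentwise as $\delta_{ik}-\pi_i$ (your $e_a-\pi$), expand the squared norm to $1-2\pi_k+\sum_j\pi_j^2$, take the expectation to obtain $1-\sum_j\pi_j^2$, and identify the collision probability with $\exp(-H_2(\pi))$. No substantive differences from the paper's argument.
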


Equation \eqref{eq:grad_norm_entropy}, which builds upon established relationships between different measures of policy entropy (e.g., in \citet{li2025logit}), proves that the expected gradient norm is monotonically coupled with policy entropy. This presents a dual challenge: \textit{1)} a confident and correct step should be reinforced strongly, but its naturally small gradient limits its impact; and \textit{2)} the large gradients from highly uncertain exploratory steps can introduce noise and destabilize training. Our first component, \textit{Self-Calibrating Gradient Scaling}, directly addresses this by re-calibrating the \textit{magnitude} of the update based on current-step uncertainty.

However, re-calibrating the update magnitude is only half the solution. A truly effective learning signal must also guide the agent in a useful \textit{direction}. This motivates our second component, the \textit{Future Clarity Bonus}, which can be conceptually justified through the lens of information theory. By providing an intrinsic motivation for the agent to seek low-entropy next states, the bonus encourages actions that yield high \textit{Information Gain} about the optimal future path. This corresponds to a local, step-wise objective of minimizing the policy's entropy at the next state:
\begin{equation}
    \label{eq:fcb_objective}
    \min_{a_t} H\!\bigl(\pi_{\theta}(s_{t+1})\bigr).
\end{equation}
This objective, which aligns with established principles like the Empowerment framework \cite{klyubin2005empowerment}, imbues the agent with a generalizable meta-skill: to actively seek clarity in the face of ambiguity.

In summary, EMPG provides a complete, two-part re-calibration of the learning signal. The gradient scaling module ensures each update has an appropriate \textit{magnitude}, while the future clarity bonus provides a principled intrinsic motivation that shapes the policy's \textit{direction} towards robust and predictable solution paths.
\section{Entropy-Modulated Policy Gradients}
\label{sec:approach}

Building on the theoretical motivation established in our preliminaries, we introduce Entropy-Modulated Policy Gradients (EMPG), a framework designed to re-calibrate the learning dynamics of policy gradients for long-horizon agent tasks. As shown in Section \ref{sec:theoretical_motivation}, standard policy gradients are inherently biased towards applying smaller updates to confident (low-entropy) steps and larger updates to uncertain (high-entropy) ones. EMPG is engineered to counteract this behavior, enabling more efficient and stable learning from sparse, outcome-based rewards.

\subsection{Quantifying Step-Level Uncertainty}
The core of our method is to quantify the agent's confidence at each decision-making step. While various uncertainty measures exist, we opt for a practical and computationally efficient proxy: the average token-level entropy over a single "reason-then-act" step. For a step $step_t$ composed of tokens $\{w_1, ..., w_m\}$, the step-level entropy $H_t$ is:
\begin{equation}
    H_t = -\frac{1}{m} \sum_{j=1}^{m} \sum_{v \in V} p(v|w_{<j}) \log p(v|w_{<j})
\end{equation}
where $p(v|w_{<j})$ is the probability of token $v$ from the vocabulary $V$, as provided by the LLM's policy $\pi_\theta$. A lower $H_t$ indicates higher confidence in the generated step, corresponding to a lower-entropy state in the sense of Proposition \ref{prop:expected_grad_norm}.

While we use policy entropy for its computational efficiency, future work could explore alternative uncertainty estimators, such as those derived from Monte Carlo dropout or the variance in logits from an ensemble of model heads. However, we believe entropy provides the most direct link to the gradient dynamics analyzed in Proposition \ref{prop:expected_grad_norm}, making it the most theoretically grounded choice for our framework.

% \subsection{The Modulated Advantage for Gradient Re-Calibration}
% In the sparse reward setting, the standard advantage estimate for every step in a trajectory $\tau_i$ is simply its outcome-based return, which we denote as $A^{(i)}$. To re-calibrate the learning signal, we introduce a modulated advantage, $A_{\text{mod}}$, for each step $t$:
% \begin{equation}
% \label{eq:main_formula}
% A_{\text{mod}}(i, t) = A^{(i)} \cdot g(H_t^{(i)}) + \zeta \cdot g'(H_{t+1}^{(i)})
% \end{equation}
% This formulation consists of two key components: a \textbf{self-calibrating gradient scaling} term to address the issues of learning speed and stability, and a \textbf{future clarity reward} to encourage robust decision-making pathways.
\subsection{The Modulated Advantage for Gradient Re-Calibrating}
In the sparse reward setting, a standard RL advantage function provides a uniform learning signal for all steps within a single trajectory. While simple, this approach overlooks the varying contributions of different steps and their impact on learning stability. To address this, we introduce a novel, modulated advantage estimate, $A_{\text{mod}}$, for each step $t$ in a trajectory $\tau_i$:
\begin{equation}
\label{eq:main_formula}
A_{\text{mod}}(i, t) = \underbrace{A^{(i)} \cdot g(H_t^{(i)})}_{\text{self-calibrating gradient scaling}} + \underbrace{\zeta \cdot f(H_{t+1}^{(i)})}_{\text{future clarity bonus}}
\end{equation}
This formulation fundamentally re-calibrates the learning signal through two complementary forms of \textbf{advantage shaping}. The first term utilizes a step-level entropy-based function $g(H_t^{(i)})$ to dynamically reweight the trajectory's shared advantage $A^{(i)}$, thereby achieving a more granular and confidence-aware gradient update. The second term, a \textbf{future clarity bonus}, is an additive shaping signal that encourages the agent to select actions that lead to a more predictable and less ambiguous future state. Together, these two mechanisms transform a coarse, trajectory-level signal into a rich and precise learning signal for each step, which we analyze further in the following sections.

\paragraph{Self-Calibrating Gradient Scaling $g(H)$.}
To counteract the natural gradient dynamics, the scaling function $g(H)$ is designed to be self-calibrating and adaptive. It achieves this by enforcing the constraint that the mean of $g(H_t^{(i)})$ over any given mini-batch is normalized to one. Mathematically, for a mini-batch of size $N_B$, this constraint is given by:
\begin{equation}
\label{eq:g_constraint}
\frac{1}{\sum_{i=1}^{N_B}T_i} \sum_{i=1}^{N_B} \sum_{t=1}^{T_{i}} g(H_t^{(i)}) = 1
\end{equation}
This principled design ensures the modulation redistributes the learning signal rather than simply inflating or deflating it, offering stability, adaptivity, and a reduction in hyperparameters. We implement this by normalizing a base exponential function by its mean over the mini-batch:
\begin{equation}
\label{eq:g_func}
g(H_t^{(i)}) = \frac{\exp(-k \cdot H_{\text{norm}, t}^{(i)})}{\frac{1}{\sum_{j=1}^{N_B}T_j} \sum_{j=1}^{N_B} \sum_{t'=1}^{T_{j}} \exp(-k \cdot H_{\text{norm}, t'}^{(i)})}
\end{equation}
For a confident step ($H_t^{(i)}$ is lower than the batch average), $g(H_t^{(i)}) > 1$, which \textbf{amplifies} its gradient. This accelerates convergence for confident and correct decisions ($A^{(i)}>0$) and provides a strong corrective penalty for confident errors ($A^{(i)}<0$), combating "hallucinated confidence". Conversely, for an uncertain step ($H_t^{(i)}$ is higher than average), $g(H_t^{(i)}) < 1$, which \textbf{attenuates} its gradient, preventing noisy updates from high-entropy exploration from destabilizing the policy.

\paragraph{Future Clarity Bonus $f(H)$.}
Beyond re-calibrating individual step updates, EMPG also encourages the agent to find globally stable and predictable solution paths. The second term in Eq. \ref{eq:main_formula} serves as an intrinsic motivation for this goal:
\begin{equation}
\label{eq:g_prime_func}
f(H_{t+1}^{(i)}) = \exp(-k' \cdot H_{\text{norm}, t+1}^{(i)})
\end{equation}
This term adds a positive bonus proportional to the confidence (low entropy) of the \textbf{next} step. Weighted by the hyperparameter $\zeta > 0$, this "future clarity" bonus actively guides the agent away from states of high confusion and towards sequences of high-quality, unambiguous decisions.

\subsection{Normalization Procedures}
\paragraph{Batch-Level Entropy Normalization.}
To ensure the modulation function $g(H)$ operates on a consistent scale, we normalize step-level entropies within each training batch using min-max scaling. This stateless approach allows the normalization to adapt dynamically to the policy's evolving confidence level. For each entropy value $H_t$ in the batch:
\begin{equation}
\label{eq:h_norm}
H_{\text{norm}, t}^{(i)} = \frac{H_t^{i} - \min_{\text{batch}}(H)}{\max_{\text{batch}}(H) - \min_{\text{batch}}(H) + \epsilon}
\end{equation}

\paragraph{Final Advantage Normalization.}
After computing the modulated advantage $A_{\text{mod}}$ for all steps in a batch, we perform a final batch-level normalization (zero mean). This standard variance reduction technique, which is crucial for stable policy updates, is achieved by subtracting the mean of $A_{\text{mod}}$ over the mini-batch of size $N_B$:
\begin{equation}
\label{eq:adv_norm}
A_{\text{final}}(i, t) = A_{\text{mod}}(i, t) - \frac{1}{N_B} \sum_{j=1}^{N_B}\sum_{t_j=1}^{T_j} A_{\text{mod}}(j, t_j)
\end{equation}

The overall EMPG algorithm is summarized in Algorithm \ref{alg:empg}, with an implementation provided in the appendix \ref{sec:appendix_implementation}. Furthermore, we provide a rigorous theoretical derivation for the EMPG update rule in Appendix~\ref{app:theoretical_derivation_empg}.

\begin{algorithm}[tb]
   \caption{Entropy-Modulated Policy Gradients (EMPG)}
   \label{alg:empg}
\begin{algorithmic}[1]
   \STATE {\bfseries Initialize:} Policy $\pi_\theta$.
   \FOR{each training iteration}
       \STATE Collect a batch of trajectories $\mathcal{B} = \{\tau_i\}$ by running policy $\pi_\theta$.
       \STATE Calculate outcome-based advantages $A^{(i)}$ for each trajectory $\tau_i \in \mathcal{B}$.
       \STATE Compute all step-level entropies $\{H_t\}$ for all steps in the batch.
       \STATE Normalize all entropies $\{H_t\}$ to $\{H_{\text{norm}, t}\}$ using batch min-max scaling.
       \STATE Compute the self-calibrating scaling factors $\{g(H_t)\}$ for all steps using Eq. \ref{eq:g_func}.
       \FOR{each step $t$ in each trajectory $\tau_i$}
           \STATE Calculate future clarity bonus $f(H_{t+1}^{(i)})$ using Eq. \ref{eq:g_prime_func}.
           \STATE Compute modulated advantage $A_{\text{mod}}(i, t)$ using Eq. \ref{eq:main_formula}.
       \ENDFOR
       \STATE Normalize the batch of all modulated advantages to get $\{A_{\text{final}}(i, t)\}$.
       \STATE Update policy parameters $\theta$ using policy gradients with $\{A_{\text{final}}(i, t)\}$.
   \ENDFOR
\end{algorithmic}
\end{algorithm}
\section{Experiments}

\subsection{Experimental Setup}

\paragraph{Tasks and Benchmarks.}
We evaluate our method on three challenging long-horizon agent benchmarks featuring sparse, binary success rewards: WebShop \citep{yao2022webshop}, a web navigation task requiring complex instruction following; ALFWorld \citep{shridhar2021alfworld}, a text-based environment combining instruction following with common-sense reasoning; and Deep Search \cite{jin2025search}, a multi-step information retrieval and synthesis task. For Deep Search, we further categorize the evaluation sets into in-domain (ID) and out-of-domain (OOD) to assess generalization.

\paragraph{Models and Agent Framework.}
Our agent employs the ReAct paradigm \citep{yao2023react}, where the LLM first generates a thought before producing an action. For WebShop and ALFWorld, we use Qwen2.5-1.5B-Instruct and Qwen2.5-7B-Instruct to compare our results with existing work. For the more complex Deep Search task, we use the powerful Qwen2.5-32B-Instruct model to conduct in-depth analysis.

\paragraph{Baselines and Implementation.}
We compare EMPG against strong policy gradient baselines: GRPO \citep{shao2024deepseekmath} and DAPO \citep{yu2025dapo}. Our method, EMPG, is implemented as an advantage modulation module that is applied directly on top of these baselines. This allows us to fairly measure the benefits of leveraging intrinsic uncertainty signals. For the WebShop and ALFWorld benchmarks, we based our implementation on the public codebase of GiGPO \citep{feng2025group} for a fair comparison. For the DeepSearch benchmark, we curated a training dataset of 17k instances by filtering from several sources, including WebWalker \citep{wu2025webwalker}, HotpotQA \citep{yang2018hotpotqa}, 2WikiMultiHopQA \citep{ho2020constructing}, NaturalQuestions \citep{kwiatkowski2019natural}, and TriviaQA \citep{joshi2017triviaqa}.

\subsection{Main Results}

Our comprehensive experiments demonstrate that EMPG yields significant and consistent performance improvements across a diverse range of tasks, baselines, and model scales.

\paragraph{Performance on ALFWorld and WebShop.}
As shown in Table~\ref{tab:main_result_webshop_alfworld}, EMPG serves as a robust enhancement to existing policy optimization algorithms. On the Qwen2.5-1.5B model, applying EMPG boosts the average success rate of GRPO on ALFWorld by +8.1 points and DAPO by +7.3 points. This effectiveness scales to the larger Qwen2.5-7B model, where EMPG again improves both baselines on ALFWorld and elevates the DAPO success rate on WebShop to an impressive 82.7\%. These results confirm that EMPG is highly compatible and provides reliable gains for different RL backbones.

\paragraph{Performance and Scalability on Deep Search.}
To investigate the scalability of our approach on more powerful models and complex retrieval tasks, we evaluated EMPG on the Deep Search benchmark using the Qwen2.5-32B-Instruct model. The results, presented in Table~\ref{tab:main_results_deepsearch_domain}, further validate our method. Applying EMPG to the strong DAPO baseline boosts the overall average score from 62.0 to 65.3, a substantial improvement of +3.3 points. This performance gain is notably robust, with EMPG improving the in-domain average by +3.1 points and demonstrating even stronger generalization with a +3.9 point gain on out-of-domain tasks.

\paragraph{Summary.}
Taken together, the results across all three benchmarks confirm that EMPG is a versatile and scalable enhancement for training LLM agents. It consistently improves performance regardless of the underlying RL algorithm, the nature of the task, or the size of the base model, validating our core hypothesis that leveraging intrinsic uncertainty is a powerful tool for learning from sparse rewards.

\begin{table}[t]
\centering
\caption{Performance on ALFWorld and WebShop. Results are averaged over 3 random seeds. For ALFWorld, we report the average success rate (\%) for each subtask as well as the overall result. For WebShop, we report both the average score and the average success rate (\%). Methods marked with * are our reproduced results. The remaining results are adopted from GiGPO~\cite{feng2025group}.}
\label{tab:main_result_webshop_alfworld}
\resizebox{0.8\textwidth}{!}{%
\begin{tabular}{@{}lccccccc|cc@{}}
\toprule
\multirow{2}{*}{Method} & \multicolumn{7}{c}{ALFWorld} & \multicolumn{2}{c}{WebShop} \\
\cmidrule(lr){2-8} \cmidrule(lr){9-10}
 & Pick & Look & Clean & Heat & Cool & Pick2 & All & Score & Succ. \\ \midrule
\multicolumn{10}{@{}l}{\textit{Base: Closed-Source Model}} \\
Prompting GPT-4o & 75.3 & 60.8 & 31.2 & 56.7 & 21.6 & 49.8 & 48.0 & 31.8 & 23.7 \\
Prompting Gemini-2.5-Pro & {92.8} & {63.3} & {62.1} & {69.0} & {26.6} & {58.7} & {60.3} & {42.5} & {35.9} \\ \midrule
\multicolumn{10}{@{}l}{\textit{Base: Qwen2.5-1.5B-Instruct}} \\
Prompting Qwen2.5 & 5.9 & 5.5 & 3.3 & 9.7 & 4.2 & 0.0 & 4.1 & 23.1 & 5.2 \\
Prompting ReAct & 17.4 & 20.5 & 15.7 & 6.2 & 7.7 & 2.0 & 12.8 & 40.1 & 11.3 \\
Prompting Reflexion & 35.3 & 22.2 & 21.7 & 13.6 & 19.4 & 3.7 & 21.8 & 55.8 & 21.9 \\
RL Training PPO (with critic) & 64.8 & 40.5 & 57.1 & 60.6 & 46.4 & 47.4 & 54.4 & 73.8 & 51.5 \\
RL Training RLOO & 88.3 & 52.8 & 71.0 & 62.8 & 66.4 & 56.9 & 69.7 & 73.9 & 52.1 \\
RL Training GRPO* & 87.9 & 40.0 & 78.1 & 35.7 & 65.2 & 44.4 & 65.6 & 78.0 & 58.2 \\
\quad\quad with EMPG* & 85.5 & 33.5 & 78.9 & 76.2 & 74.7 & 69.1 & 73.7\gain{8.1} & 80.4 & 60.8\gain{2.6} \\
RL Training DAPO* & 88.1 & 61.4 & 82.5 & \textbf{90.1} & 83.9 & 69.5 & 80.8 & 85.9 & 73.2 \\
\quad\quad with EMPG* & \textbf{97.7} & \textbf{80.7} & \textbf{87.5} & 87.0 & \textbf{88.3} & \textbf{80.0} & \textbf{88.1}\gain{7.3} & \textbf{86.8} & \textbf{73.8}\gain{0.6} \\
\midrule
\multicolumn{10}{@{}l}{\textit{Base: Qwen2.5-7B-Instruct}} \\
Prompting Qwen2.5 & 33.4 & 21.6 & 19.3 & 6.9 & 2.8 & 3.2 & 14.8 & 26.4 & 7.8 \\
Prompting ReAct & 48.5 & 35.4 & 34.3 & 13.2 & 18.2 & 17.6 & 31.2 & 46.2 & 19.5 \\
Prompting Reflexion & 62.0 & 41.6 & 44.9 & 30.9 & 36.3 & 23.8 & 42.7 & 58.1 & 28.8 \\
RL Training PPO (with critic) & 92.3 & 64.0 & 92.5 & 89.5 & 80.3 & 68.8 & 80.4 & 81.4 & 68.7 \\
RL Training RLOO & 87.6 & 78.2 & 87.3 & 81.3 & 71.9 & 48.9 & 75.5 & 80.3 & 65.7 \\
RL Training GRPO* & 88.8 & 43.7 & 88.1 & 70.3 & 77.7 & 56.8 & 74.8 & 77.8 & 65.6 \\
\quad\quad with EMPG* & 92.9 & 75.2 & 74.8 & 86.3 & 73.7 & 65.3 & 78.5\gain{3.7} & 81.0 & 69.3\gain{3.7} \\
RL Training DAPO* & 98.9 & 86.1 & 94.9 & 83.2 & \textbf{81.4} & 90.1 & 90.0 & 90.6 & 79.6 \\
\quad\quad with EMPG* & \textbf{99.0} & \textbf{86.8} & \textbf{97.3} & \textbf{94.9} & 75.8 & \textbf{90.3} & \textbf{91.6}\gain{1.6} & \textbf{92.0} & \textbf{82.7}\gain{3.1} \\
\bottomrule
\end{tabular}}
\end{table}

% \begin{table}[htbp]
%     \centering
%     \caption{Main results on Deep Search tasks.}
%     \label{tab:main_results_deepsearch}
% \resizebox{0.7\textwidth}{!}{%
%     \begin{tabular}{lcccccc}
%         \toprule
%         \textbf{Method} & \textbf{WebWalker} & \textbf{HotpotQA} & \textbf{Musique} & \textbf{2wiki} & \textbf{Bamboogle} & \textbf{Avg.} \\
%         \midrule
%         \textbf{Qwen2.5-32B-Instruct} & & & & & & \\
%         DAPO & 55.1 & 66.4 & 38.8 & 68.9 & 80.8 & 62.0 \\
%         DAPO + Gradient Scaling  & 54.9 & 68.8 & 41.0 & 67.4 & 86.4 & 63.7 \\
%         DAPO + Future Reward & \textbf{60.6} & 69.7 & 40.4 & 67.9 & 82.4 & 64.2 \\
%         DAPO + EMPG (Ours) & 57.5 & \textbf{71.2} & \textbf{41.8} & \textbf{71.0} & \textbf{84.8} & \textbf{65.3} \\
%         % \midrule
%         % \textbf{Qwen3-32B-Instruct} & & & & & & \\
%         % DAPO & & & & & & \\
%         % DAPO + EPGM (Ours) & & & & & & \\
%         \bottomrule
%     \end{tabular}}
% \end{table}

\begin{table}[t]
    \centering
    \caption{Main results on Deep Search tasks, categorized by domain. EMPG demonstrates strong performance on both in-domain (ID) and out-of-domain (OOD) datasets, with a particularly notable gain in generalization to OOD tasks.}
    \label{tab:main_results_deepsearch_domain}
    \resizebox{0.8\textwidth}{!}{%
    \begin{tabular}{l ccc c ccc c}
        \toprule
        & \multicolumn{4}{c}{\textbf{In-domain (ID)}} & \multicolumn{3}{c}{\textbf{Out-of-domain (OOD)}} & \textbf{Overall} \\
        \cmidrule(lr){2-5} \cmidrule(lr){6-8}
        \textbf{Method} & \textbf{WebWalker} & \textbf{HotpotQA} & \textbf{2wiki} & \textbf{Avg.} & \textbf{Musique} & \textbf{Bamboogle} & \textbf{Avg.} & \textbf{Avg.} \\
        \midrule
        \multicolumn{9}{l}{\textbf{Qwen2.5-32B-Instruct}} \\
        DAPO (Baseline) & 55.1 & 66.4 & 68.9 & 63.5 & 38.8 & 80.8 & 59.8 & 62.0 \\
        \midrule
        \multicolumn{9}{l}{\textit{Ablation Studies}} \\
        + Gradient Scaling & 54.9 & 68.8 & 67.4 & 63.7 & 41.0 & 86.4 & 63.7 & 63.7 \\
        + Future Bonus & \textbf{60.6} & 69.7 & 67.9 & 66.1 & 40.4 & 82.4 & 61.4 & 64.2 \\
        \midrule
        \textbf{+ EMPG (Ours)} & 57.5 & \textbf{71.2} & \textbf{71.0} & \textbf{66.6} & \textbf{41.8} & \textbf{84.8} & \textbf{63.7} & \textbf{65.3} \\
        \midrule
        \textbf{Gain vs. Baseline} & \gain{2.4} & \gain{4.8} & \gain{2.1} & \textbf{\gain{3.1}} & \gain{3.0} & \gain{4.0} & \textbf{\gain{3.9}} & \textbf{\gain{3.3}} \\
        \bottomrule
    \end{tabular}}
\end{table}

\subsection{Analysis}

To understand the mechanisms behind EMPG's effectiveness, we conduct a series of in-depth analyses focusing on three key questions: (1) What are the individual contributions of EMPG's core components? (2) How does EMPG affect the learning process over time? (3) Why is a step-level analysis of entropy crucial?
\begin{figure}[t]
    % 左侧的图
    \begin{minipage}[b]{0.48\textwidth}
        \centering
        \includegraphics[width=\linewidth]{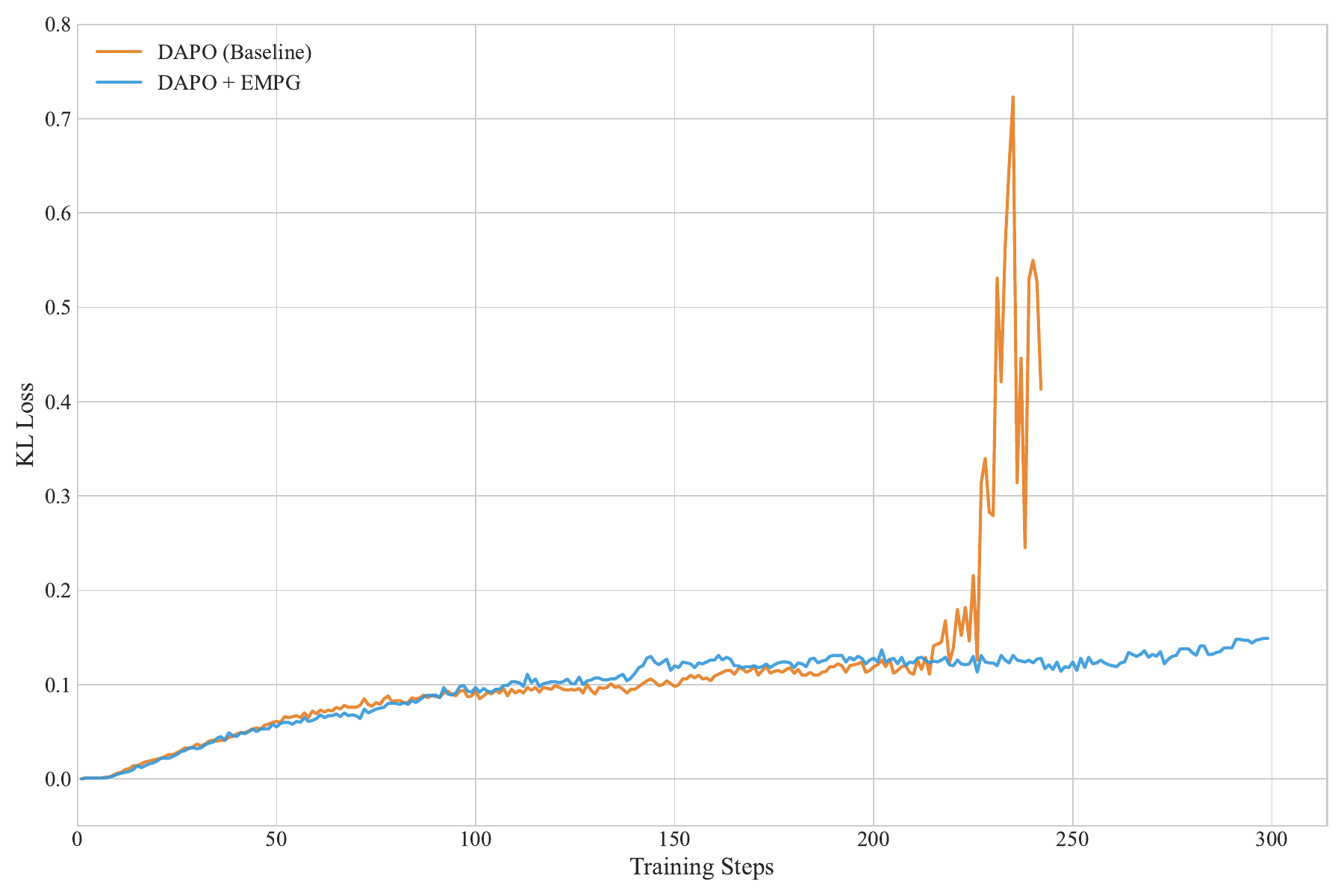}
        % 这个 caption 会被正常编号，例如 "Figure 1"
        \caption{KL Loss dynamics during training for the Qwen2.5-32B-Instruct model. The DAPO baseline (orange) suffers from late-stage instability, evidenced by the sharp, erratic spike in KL Loss. The EMPG-enhanced model (blue) remains stable throughout, showcasing its robustness.}
        \label{fig:kl_loss_stability}
    \end{minipage}
    \hfill % 在两个图之间添加弹性水平空白
    % 右侧的图
    \begin{minipage}[b]{0.48\textwidth}
        \centering
        \includegraphics[width=\linewidth]{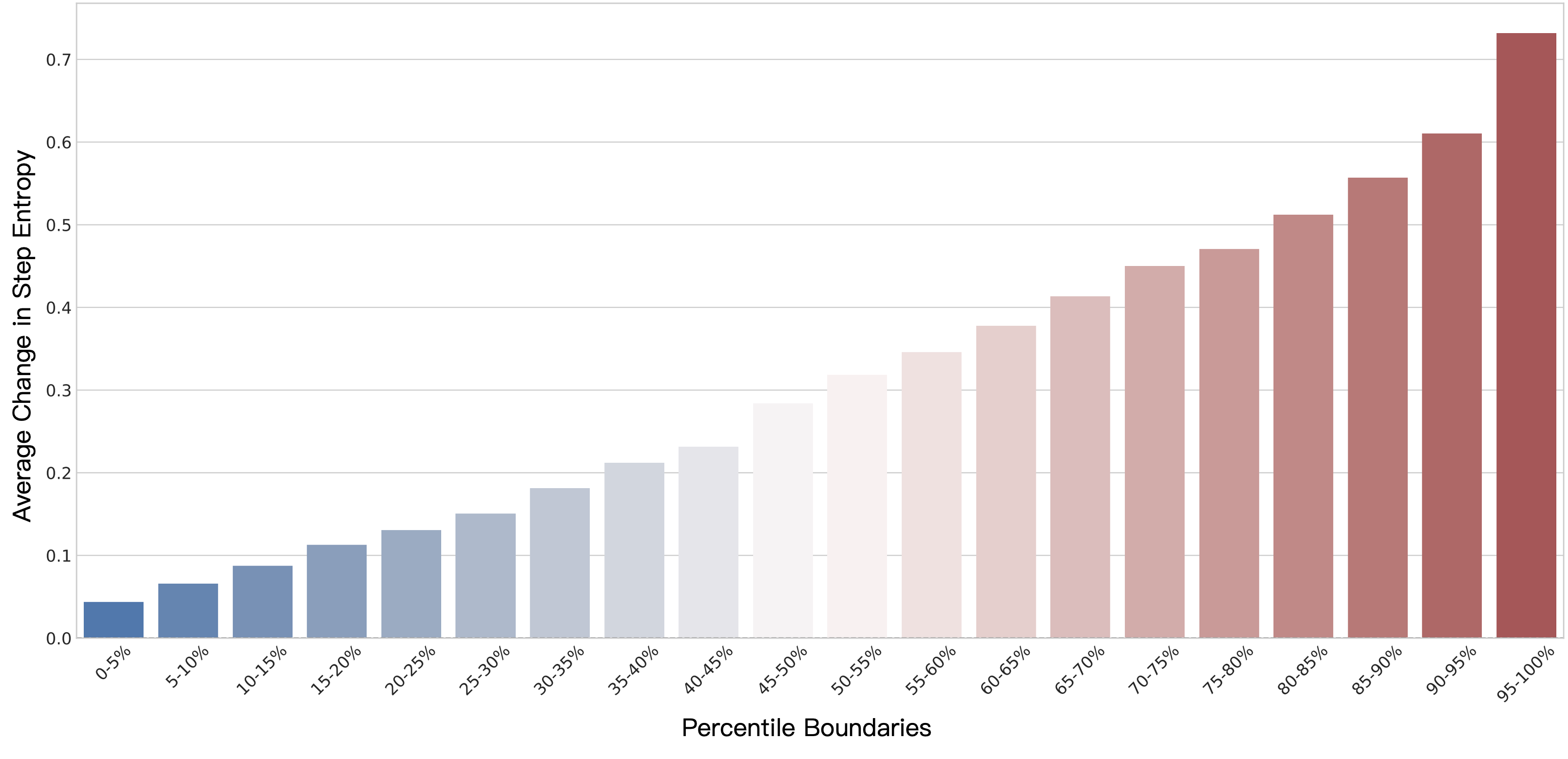}
        % 这个 caption 会被独立编号，例如 "Figure 2"
        \caption{Average entropy change after RL fine-tuning within each 5\% entropy percentile range. Unlike token-level findings, even low-entropy steps undergo significant changes, validating our step-level analysis.}
        \label{fig:step_entropy_change}
    \end{minipage}
\end{figure}

\paragraph{Ablation Study and Generalization Analysis.}
To dissect the contributions of our method's two main components, we perform a detailed ablation study using the results from the Deep Search benchmark, as presented in Table \ref{tab:main_results_deepsearch_domain}. The study reveals a distinct and complementary duality in their roles, which stems from how they shape the policy during training. The \textit{Future Clarity Bonus} acts as a powerful \textit{exploitation} signal during training. By reinforcing known, high-quality decision sequences within the training data, it helps the model master the in-domain distribution, leading to a strong performance gain of \textit{+2.6} points on ID tasks.
Conversely, the \textit{Self-Calibrating Gradient Scaling} serves as a powerful \textit{regularization} mechanism during training, teaching the model how to behave when it is uncertain. By attenuating updates for high-entropy steps, it produces a final policy that is inherently more robust and less brittle. This learned robustness is then observed during testing on out-of-domain tasks, where the model faces novel inputs that induce high uncertainty. Because the policy has learned not to overreact in such situations, it exhibits superior generalization, providing a robust gain of \textit{+3.9} points on OOD tasks. This demonstrates that EMPG is not merely overfitting; instead, by learning a fundamental skill of how to handle uncertainty, it acquires a more resilient problem-solving approach that generalizes effectively. Crucially, the full EMPG model, which integrates both mechanisms, demonstrates a powerful synergy: the model learns to efficiently exploit known patterns while being robust to novel ones.

% \begin{figure}[t]
% \centering
% \includegraphics[width=0.7\textwidth]{figures/kl_loss_stability_comparison.pdf}
% \caption{KL Loss dynamics during training for the Qwen2.5-32B-Instruct model. The DAPO baseline (orange) suffers from late-stage instability, evidenced by the sharp, erratic spike in KL Loss. The EMPG-enhanced model (blue) remains stable throughout, showcasing its robustness.}
% \label{fig:kl_loss_stability}
% \end{figure}

\paragraph{Enhancing Training Stability.}
Beyond improving sample efficiency, EMPG also significantly enhances the stability and robustness of the training process. A common failure mode in online RL fine-tuning is "policy collapse," where the agent's policy diverges late in training, leading to a catastrophic drop in performance. We visualize this phenomenon by tracking the KL Loss during training, as shown in Figure~\ref{fig:kl_loss_stability}. The DAPO baseline agent initially learns effectively, but its KL Loss becomes highly erratic after approximately 240 training steps, indicating severe instability. In contrast, the EMPG-enhanced agent maintains a low and stable KL Loss throughout the entire training run. This demonstrates that EMPG's mechanisms, particularly the self-calibrating gradient scaling, effectively regularize the policy updates, preventing the overly aggressive changes that can lead to divergence and ensuring a more reliable convergence to a high-performance policy. To ensure a fair comparison, \textbf{we select the checkpoint at 220 steps for both the baseline and EMPG} for final evaluation. Despite this, our method could continue to improve its performance with further training.

\paragraph{Step-Level vs. Token-Level Entropy Dynamics.}
Our work diverges from prior analyses \citep{wang2025beyond} by focusing on entropy at the "reason-act" step level rather than the token level. To validate this choice, we investigate whether the token-level observation—that RL updates primarily affect high-entropy tokens—holds at the step level. We analyze over 9,000 steps on ALFWorld and plot the average entropy change for steps, binned by their initial entropy percentile (Figure \ref{fig:step_entropy_change}). Our findings are significant: unlike at the token level, even steps with very low initial entropy (e.g., the 15\%-20\% percentile) still undergo substantial average entropy changes. This shows the dynamics do not transfer; a confident step can still require significant policy updates. This key finding underscores the importance of our step-centric approach and motivates the design of EMPG to modulate updates across the entire confidence spectrum.

\paragraph{Analysis of Learning Dynamics.}
An analysis of the learning dynamics, presented in Figure \ref{fig:learning_curves}, clearly reveals EMPG's critical role in overcoming the performance limitations of baseline methods. Across all experiments on both the ALFWorld and WebShop benchmarks, the baseline agents consistently reach a distinct performance plateau, where their learning stagnates and the success rate ceases to improve. In stark contrast, the EMPG-enhanced agents decisively break through this performance ceiling. By providing a richer and more effective learning signal, EMPG enables the agents to sustain their learning momentum, pushing beyond the baseline's peak and ultimately converging to a significantly higher final success rate. This demonstrates that EMPG is not just accelerating learning, but is fundamentally guiding the agent to discover superior policies that are otherwise inaccessible, effectively escaping the local optima where the baseline methods become trapped.

\section{Conclusion}

In this work, we introduced Entropy-Modulated Policy Gradients (EMPG), a novel and principled framework to alleviate the long-standing credit assignment problem in long-horizon LLM agent training. By leveraging the intrinsic uncertainty of the agent's "reasoning-action" steps, EMPG dynamically re-calibrates the policy gradient, moving beyond the limitations of sparse, end-of-task rewards. Our method directly addresses the dual challenges of standard policy gradients: it amplifies updates for confident and correct actions, strongly penalizes confident but incorrect steps, and attenuates updates for uncertain steps to promote stability. Through comprehensive experiments on challenging benchmarks, including WebShop, ALFWorld, and Deep Search, we demonstrated substantial performance gains over strong baselines like GRPO and DAPO. More fundamentally, our work addresses a general challenge inherent in policy gradient methods operating in high-dimensional action spaces: the "entropy-gradient coupling" problem. We frame EMPG not as a domain-specific technique but as a general-purpose method for variance reduction and credit assignment, using the policy's own uncertainty as an adaptive, step-level baseline.

Our findings suggest that an agent's intrinsic uncertainty is a powerful, yet underexplored, signal for self-supervision in complex decision-making processes. EMPG provides a scalable alternative to costly process-based reward models, forging a dense, informative learning signal from minimal external feedback. For future work, we plan to explore the application of EMPG to other long-horizon tasks, such as embodied AI and multi-agent collaboration. We believe that this work lays a foundational stone for developing more efficient, robust, and self-correcting autonomous agents.

% \section*{Acknowledgments}

\clearpage

\bibliographystyle{plainnat}
\bibliography{main}

\clearpage

\beginappendix

\appendix
\section{Proof of Proposition \ref{prop:expected_grad_norm}}
\label{app:proof}
We aim to prove that $\mathbb{E}_{a_k \sim \pi} \left[ ||\nabla_{z} \log \pi_k||^2 \right] = 1 - \sum_{j=1}^{|V|} \pi_j^2$. The proof requires the result for the gradient norm of a single action $a_k$, which we state as a lemma.

\newtheorem*{lemma_}{Lemma}
\begin{lemma_}
The squared L2-norm of the score function with respect to the logits, for a chosen action $a_k$, is given by: $||\nabla_{z} \log \pi_k||^2 = 1 - 2\pi_k + \sum_{j=1}^{|V|} \pi_j^2$.
\end{lemma_}
\begin{proof}[Proof of Lemma]
Let the logits be $z = (z_1, \dots, z_{|V|})$. The policy is $\pi_k = \exp(z_k) / \sum_{j} \exp(z_j)$. The partial derivative of the log-probability $\log \pi_k$ with respect to an arbitrary logit $z_i$ is $\frac{\partial \log \pi_k}{\partial z_i} = \delta_{ik} - \pi_i$, where $\delta_{ik}$ is the Kronecker delta. The squared L2-norm of the gradient vector $\nabla_z \log \pi_k$ is therefore:
\begin{align*}
    ||\nabla_z \log \pi_k||^2 &= \sum_{i=1}^{|V|} ( \delta_{ik} - \pi_i )^2 = (1-\pi_k)^2 + \sum_{i \neq k} (-\pi_i)^2 \\
    &= (1 - 2\pi_k + \pi_k^2) + \sum_{i \neq k} \pi_i^2 = 1 - 2\pi_k + \sum_{j=1}^{|V|} \pi_j^2
\end{align*}
\end{proof}

\begin{proof}[Proof of Proposition \ref{prop:expected_grad_norm}]
The expectation is taken over all possible choices of action $a_k$ according to the policy distribution $\pi$. Using the result from the lemma:
\begin{align*}
    \mathbb{E}_{k \sim \pi} \left[ ||\nabla_{z} \log \pi_k||^2 \right] &= \sum_{k=1}^{|V|} \pi_k \cdot \left( ||\nabla_{z} \log \pi_k||^2 \right) \\
    &= \sum_{k=1}^{|V|} \pi_k \left( 1 - 2\pi_k + \sum_{j=1}^{|V|} \pi_j^2 \right) \\
    &= \sum_{k=1}^{|V|} \pi_k - 2\sum_{k=1}^{|V|} \pi_k^2 + \sum_{k=1}^{|V|} \pi_k \left( \sum_{j=1}^{|V|} \pi_j^2 \right) \\
    &= 1 - 2\sum_{k=1}^{|V|} \pi_k^2 + \left( \sum_{j=1}^{|V|} \pi_j^2 \right) \left( \sum_{k=1}^{|V|} \pi_k \right) \quad \text{(Factor out constant term)} \\
    &= 1 - 2\sum_{k=1}^{|V|} \pi_k^2 + \left( \sum_{j=1}^{|V|} \pi_j^2 \right) \cdot 1 \\
    &= 1 - \sum_{k=1}^{|V|} \pi_k^2
\end{align*}
Recalling the definition of Rényi entropy of order 2, $H_2(\pi) = -\log(\sum_{j=1}^{|V|} \pi_j^2)$, we can identify the term $\sum \pi_j^2$ as the collision probability, which is equivalent to $\exp(-H_2(\pi))$. Substituting this into our result yields the final information-theoretic form:
\begin{equation*}
    \mathbb{E}_{k \sim \pi} \left[ ||\nabla_{z} \log \pi_k||^2 \right] = 1 - \exp(-H_2(\pi))
\end{equation*}
This completes the proof of the proposition.
\end{proof}

\clearpage

\section{Theoretical Foundation of the EMPG Update Rule}
\label{app:theoretical_derivation_empg}

In this section, we provide a rigorous theoretical justification for the Entropy-Modulated Policy Gradients (EMPG) algorithm. We demonstrate that the EMPG update rule can be formally derived as the gradient of a composite objective function, $J_{\text{EMPG}}(\theta)$. This interpretation substantiates that EMPG is a principled optimization method that reshapes the standard reinforcement learning objective to favor policies that are both effective and robust.

\subsection{The Standard Policy Gradient Objective}
We begin with the standard objective in policy-based reinforcement learning, which is to maximize the expected total return. In the context of sparse, outcome-based rewards, this objective simplifies to maximizing the expected advantage (return) of a trajectory $\tau$:
\begin{equation}
    J(\theta) = \E_{\tau \sim \pi_\theta}[A^{(\tau)}]
\end{equation}
where $A^{(\tau)}$ is the scalar return for a trajectory $\tau$ sampled from the policy $\pi_\theta$. The gradient of this objective is given by the Policy Gradient Theorem:
\begin{equation}
    \grad J(\theta) = \E_{\tau \sim \pi_\theta} \left[ \left( \sum_{t=0}^{T-1} \grad \logpi \right) A^{(\tau)} \right]
    \label{eq:standard_pg}
\end{equation}
For any single trajectory $\tau$, the gradient estimator is $\mathcal{G}^{(\tau)}(\theta) = A^{(\tau)} \sum_{t=0}^{T-1} \grad \logpi$. This formulation reveals the core issue identified in Proposition~\ref{prop:expected_grad_norm}: the contribution of each step's score function, $\grad \logpi$, is weighted uniformly by the trajectory's outcome $A^{(\tau)}$, while its norm is intrinsically coupled with the policy entropy $H_t$.

\subsection{The EMPG Composite Objective Function}
We posit that EMPG performs gradient ascent on a composite objective function $J_{\text{EMPG}}(\theta)$. This objective augments the standard RL objective with a term that explicitly accounts for policy uncertainty, thereby decoupling the learning signal's magnitude and direction from the policy's raw confidence. We define this objective as:
\begin{equation}
    J_{\text{EMPG}}(\theta) = J_{\text{extrinsic}}(\theta) + J_{\text{intrinsic}}(\theta)
\end{equation}
Here, $J_{\text{extrinsic}}(\theta)$ is a re-weighted extrinsic objective that addresses the gradient \textit{magnitude} problem, and $J_{\text{intrinsic}}(\theta)$ is an intrinsic objective that guides the policy's \textit{direction} towards states of higher certainty.

\subsubsection{The Re-weighted Extrinsic Objective}
The self-calibrating gradient scaling component of EMPG, $A^{(\tau)} \cdot \gH$, can be interpreted as performing an update on a modified extrinsic objective. Formally, we define a state-dependent weighting function $\omega(s_t, \theta) = \gH$, which is a function of the policy's entropy at state $s_t$. The gradient update for this component is:
\begin{equation}
    \mathcal{G}_{\text{extrinsic}}^{(\tau)}(\theta) = \sum_{t=0}^{T-1} A^{(\tau)} \cdot \omega(s_t^{(\tau)}, \theta) \cdot \grad \logpi
\end{equation}
This formulation is equivalent to optimizing the standard objective $J(\theta)$ under a \textit{state-dependent measure}, where the contribution of each state is re-weighted. While deriving a closed-form objective $J_{\text{extrinsic}}(\theta)$ is non-trivial because $\omega$ depends on $\theta$ in a complex manner (via batch statistics), this interpretation is sufficient to justify the update rule. The weighting function $\omega(s_t, \theta)$ serves as an adaptive, information-theoretic learning rate that directly counteracts the dynamics described in Proposition~\ref{prop:expected_grad_norm}. It amplifies the learning signal for confident (low-entropy) steps and dampens it for uncertain (high-entropy) steps, thus achieving a direct re-calibration of the gradient's magnitude.

\subsubsection{The Intrinsic Clarity Objective}
The Future Clarity Bonus can be modeled as the gradient of a well-defined intrinsic objective function. We define an intrinsic reward, $r^{\text{int}}_t$, awarded at step $t$ for transitioning to a state $s_{t+1}$ with high policy clarity:
\begin{definition*}[Clarity Reward]
The intrinsic clarity reward at step $t$ is a function of the policy entropy at the subsequent state $s_{t+1}$:
\begin{equation}
    r^{\text{int}}_t(s_{t+1}; \theta) = \zeta \cdot f(H(\pi_\theta(\cdot|s_{t+1}))) = \zeta \cdot \exp(-k' \cdot H_{\text{norm}, t+1})
\end{equation}
\end{definition*}
This reward incentivizes actions that lead to predictable future states. The corresponding intrinsic objective, $J_{\text{intrinsic}}(\theta)$, is the expected cumulative intrinsic reward:
\begin{equation}
    J_{\text{intrinsic}}(\theta) = \E_{\tau \sim \pi_\theta} \left[ \sum_{t=0}^{T-1} r^{\text{int}}_t(s_{t+1}; \theta) \right]
\end{equation}
Applying the policy gradient theorem to this objective, and using the immediate intrinsic reward as a one-step advantage estimate (a common form of advantage shaping), yields the gradient:
\begin{align}
    \grad J_{\text{intrinsic}}(\theta) &= \E_{\tau \sim \pi_\theta} \left[ \sum_{t=0}^{T-1} \left( \grad \logpi \right) r^{\text{int}}_t(s_{t+1}; \theta) \right] \\
    &= \E_{\tau_i \sim \pi_\theta} \left[ \sum_{t=0}^{T_i-1} \left( \grad \logpi \right) \zeta \cdot \fH \right]
\end{align}
This gradient precisely matches the Future Clarity Bonus component of the EMPG update.

\subsection{Synthesis: The Full EMPG Gradient}
By combining the gradients of the extrinsic and intrinsic objectives, we recover the full EMPG gradient estimator for a single trajectory $\tau$:
\begin{align}
    \mathcal{G}_{\text{EMPG}}^{(\tau)}(\theta) &= \mathcal{G}_{\text{extrinsic}}^{(\tau)}(\theta) + \grad J_{\text{intrinsic}}(\theta)|_{\tau} \\
    &= \sum_{t=0}^{T-1} A^{(\tau)} \cdot \gH \cdot \grad \logpi + \sum_{t=0}^{T-1} \zeta \cdot \fH \cdot \grad \logpi \\
    &= \sum_{t=0}^{T-1} \left( A^{(\tau)} \cdot \gH + \zeta \cdot \fH \right) \grad \logpi
\end{align}
This derivation confirms that the EMPG algorithm performs a principled gradient ascent on the composite objective $J_{\text{EMPG}}(\theta)$. This objective function holistically reshapes the optimization landscape by (1) adaptively scaling the extrinsic reward signal to ensure its magnitude is motivationally salient rather than merely a function of policy entropy, and (2) introducing an intrinsic drive towards robust, predictable solution paths. This dual-pronged approach provides a theoretical foundation for why EMPG successfully mitigates the challenges posed by the inherent dynamics of standard policy gradients.

\section{Experimental Settings}

This appendix provides a detailed description of the experimental settings, hardware configurations, and hyperparameter choices for our experiments across the three main benchmarks. Due to the differences in training frameworks and task environments, the settings for WebShop/ALFWorld and Deep Search are described in separate subsections.

\subsection{WebShop and ALFWorld Experiments}
Our experiments on WebShop and ALFWorld are conducted within the \href{https://github.com/langfengQ/verl-agent}{\textbf{Verl-Agent}} framework, an extension of the \textbf{veRL} \cite{sheng2024hybridflow} training codebase specifically designed for training large language model (LLM) agents via reinforcement learning. Verl-Agent provides a powerful and scalable platform for long-horizon, multi-turn RL training by enabling fully customizable per-step input structures, history management, and memory modules. It supports a diverse set of RL algorithms and a rich suite of agent environments, making it highly suitable for our work.

For a fair comparison, all experiments were re-executed on our hardware platform. While the original experiments were performed using H200 GPUs, our work utilized A100 GPUs due to resource constraints. We observed that the original training scripts for the Qwen2.5-1.5B-Instruct model, designed for 2 $\times$ H100, would result in out-of-memory errors on A100s. Therefore, we used 4 $\times$ A100 GPUs for the 1.5B models and 8 $\times$ A100 GPUs for the 7B models. All baselines were re-trained under the same hardware, seeds, and settings to ensure strict comparability. The key hyperparameters for these experiments are summarized in Table \ref{tab:webshop_alfworld_params}.

\begin{table}[h]
    \centering
    \caption{Key Hyperparameters for WebShop and ALFWorld Experiments.}
    \label{tab:webshop_alfworld_params}
    \begin{tabular}{ll}
        \toprule
        \textbf{Parameter} & \textbf{Value} \\
        \midrule
        Actor Learning Rate & 1e-6 \\
        KL Loss Coefficient & 0.01 \\
        KL Penalty & low var kl \\
        Entropy Coefficient & 0.001 \\
        Clip High (DAPO) & 0.28 \\
        Clip Low (DAPO) & 0.2 \\
        Clip Low/High (GRPO) & 0.2 \\
        Batch Size & 16 \\
        Training Step & 150 \\
        Rollout Group Size & 8 \\
        Rollout Temperature & 1.0 \\
        $\zeta$ & 0.05 \\
        $k$, $k'$ & 1.0 \\
        Max Actions (ALFWorld) & 50 \\
        Max Actions (WebShop) & 15 \\
        History Observation & 2 \\
        GPUs & 4 $\times$ A100 (1.5B), 8 $\times$ A100 (7B) \\
        \bottomrule
    \end{tabular}
\end{table}

\subsection{Deep Search Experiments}
Our experiments on the Deep Search task were conducted using a proprietary RL training framework from ByteDance. The agent was equipped with two primary tools: Bing Search as the search engine and a web viewer tool capable of reading web page content and summarizing long articles.

A key part of the Deep Search training was the data curation process. We constructed a unique training dataset of 17,000 instances by filtering from a variety of public benchmarks, including WebWalker \cite{wu2025webwalker}, HotpotQA \cite{yang2018hotpotqa}, 2WikiMultiHopQA \cite{ho2020constructing}, NaturalQuestions \cite{kwiatkowski2019natural}, and TriviaQA \cite{joshi2017triviaqa}. We gratefully acknowledge the initial data collection and preliminary filtering by the DeepResearcher team \cite{zheng2025deepresearcher}. We performed two deeper filtering steps:
\begin{enumerate}
    \item \textbf{Direct Answer Filtering:} We sampled 5 results per question using Doubao-Seed-1.6 (Thinking) \cite{seed2025seed1}. We then filtered out all questions that could be answered directly (where at least one of the 5 results was correct) to ensure the agent learns to use its search tools rather than relying on memorized answers.
    \item \textbf{Agent Workflow Filtering:} We further filtered the dataset by sampling 8 results using a search workflow built on Doubao-Seed-1.6 (Thinking). We removed data points that were "stably all-correct" to focus the RL training on more challenging instances and improve training efficiency.
\end{enumerate}
The key hyperparameters for the RL training on the Deep Search task are detailed in Table \ref{tab:deepsearch_params}.

\begin{table}[h]
    \centering
    \caption{Key Hyperparameters for Deep Search Experiments.}
    \label{tab:deepsearch_params}
    \begin{tabular}{ll}
        \toprule
        \textbf{Parameter} & \textbf{Value} \\
        \midrule
        Actor Learning Rate & 1e-6 \\
        KL Loss Coefficient & 0.001 \\
        KL Penalty & low var kl \\
        Entropy Coefficient & 0.0 \\
        Clip High & 0.28 \\
        Clip Low & 0.2 \\
        Batch Size & 64 \\
        Training Step & 220 \\
        Rollout Group Size & 16 \\
        Rollout Temperature & 1.0 \\
        $\zeta$ & 0.1 \\
        $k$, $k'$ & 1.0 \\
        Max Actions & 15 \\
        GPUs & 32 $\times$ A100 \\
        \bottomrule
    \end{tabular}
\end{table}

\section{Analysis of Learning Dynamics}
\counterwithin{figure}{section}
\setcounter{figure}{0}

This section provides a detailed visualization of the learning dynamics, complementing the analysis in the main body of the paper. Figure \ref{fig:learning_curves} illustrates the training progress of EMPG-enhanced agents compared to their baseline counterparts (GRPO and DAPO) on both the WebShop and ALFWorld benchmarks. As shown in the learning curves, the baseline agents consistently hit a performance ceiling, with their success rates stagnating early in the training process. In contrast, our EMPG-enhanced agents overcome this plateau, sustaining their learning momentum to achieve significantly higher final success rates across all settings. This evidence supports our central claim that EMPG provides a more effective learning signal, enabling agents to escape the local optima that trap standard policy gradient methods.

\begin{figure}[t]
    \centering
    
    % Subfigure for Webshop - GRPO
    \begin{subfigure}[b]{0.48\textwidth}
        \centering
        \includegraphics[width=\linewidth]{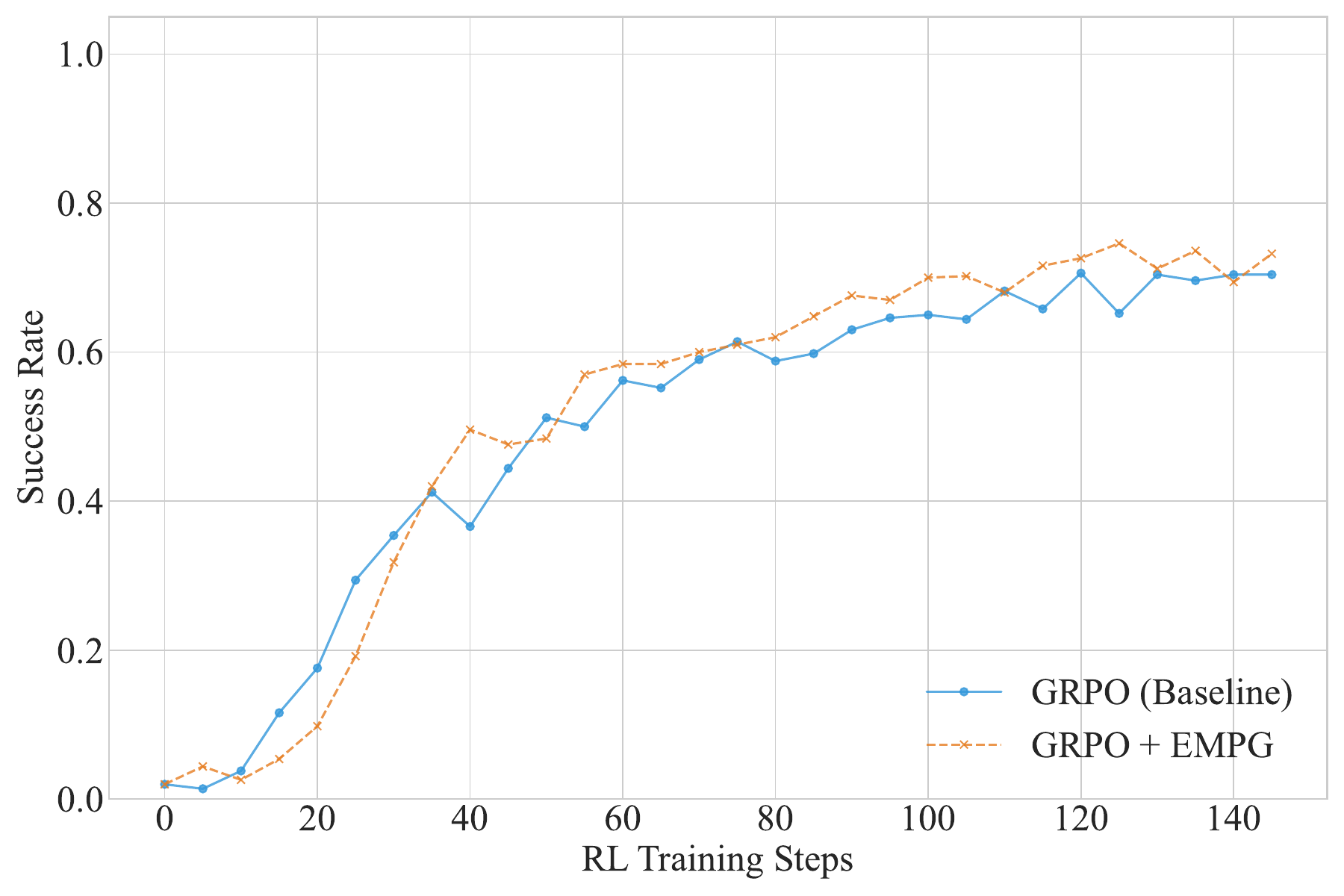}
        \caption{WebShop: GRPO}
        \label{fig:webshop_grpo}
    \end{subfigure}
    \hfill % Adds horizontal space
    % Subfigure for Webshop - DAPO
    \begin{subfigure}[b]{0.48\textwidth}
        \centering
        \includegraphics[width=\linewidth]{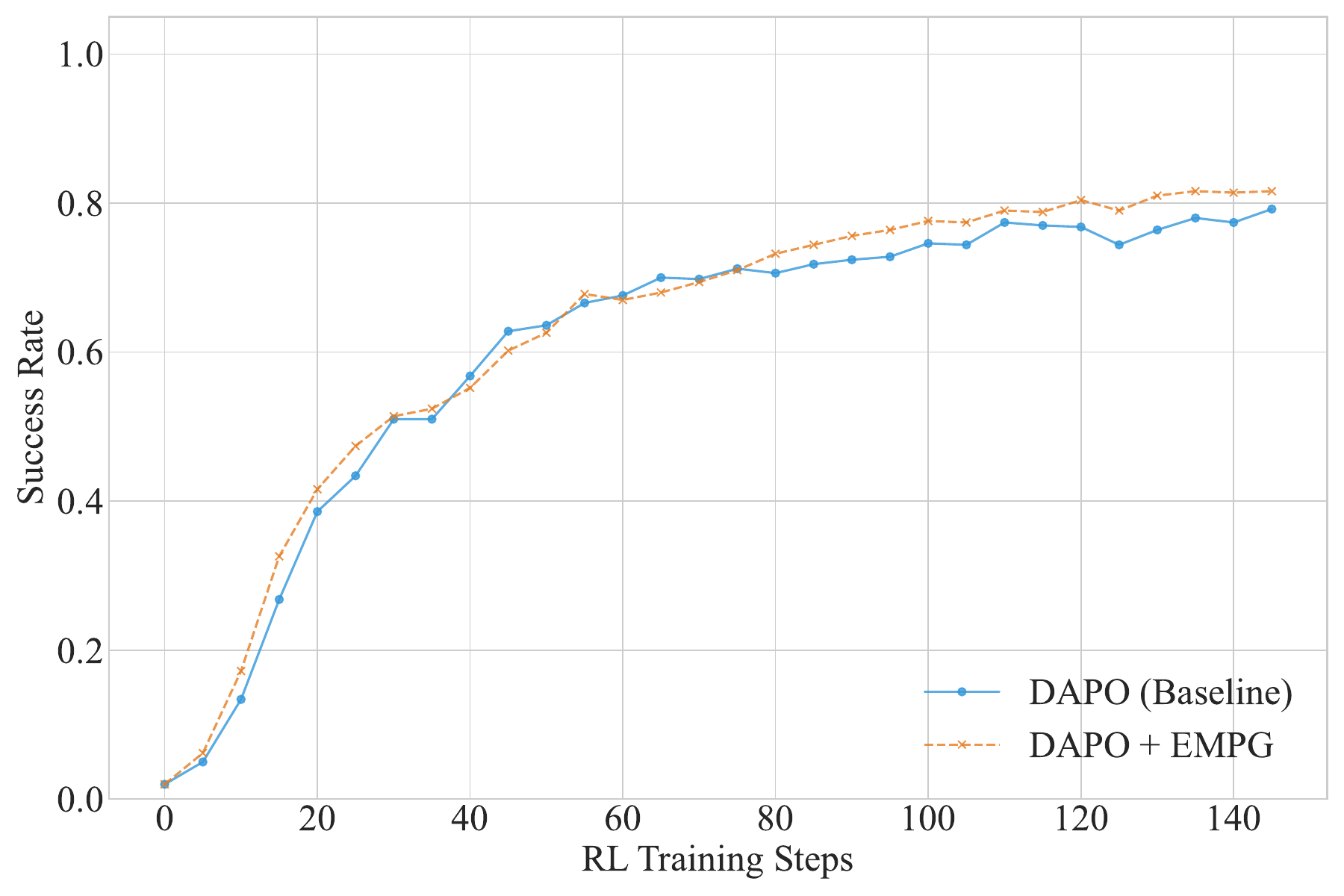}
        \caption{WebShop: DAPO}
        \label{fig:webshop_dapo}
    \end{subfigure}
    
    \vspace{0.3cm} % Adds vertical space
    
    % Subfigure for ALFWorld - GRPO
    \begin{subfigure}[b]{0.48\textwidth}
        \centering
        \includegraphics[width=\linewidth]{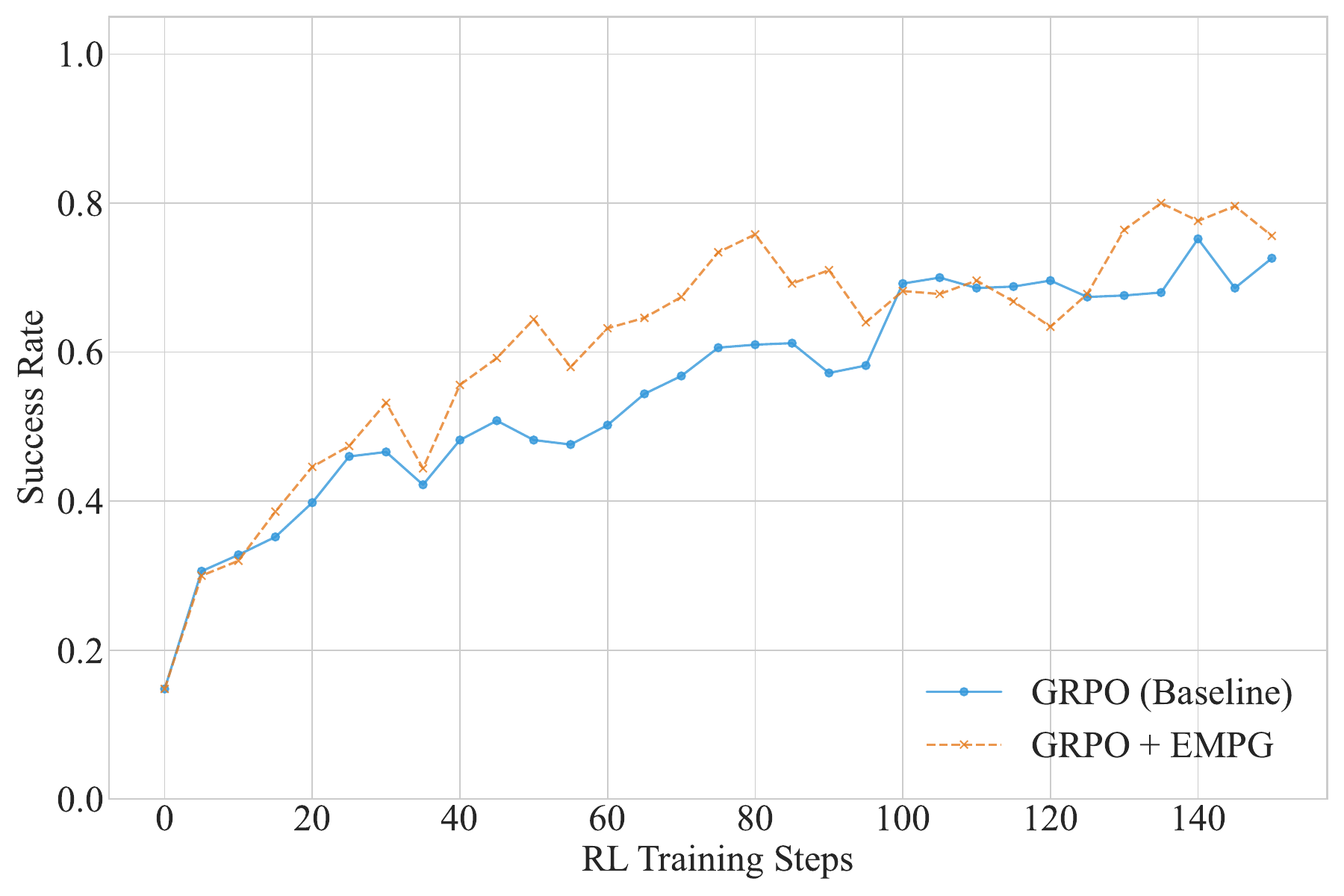}
        \caption{ALFWorld: GRPO}
        \label{fig:alfworld_grpo}
    \end{subfigure}
    \hfill % Adds horizontal space
    % Subfigure for ALFWorld - DAPO
    \begin{subfigure}[b]{0.48\textwidth}
        \centering
        \includegraphics[width=\linewidth]{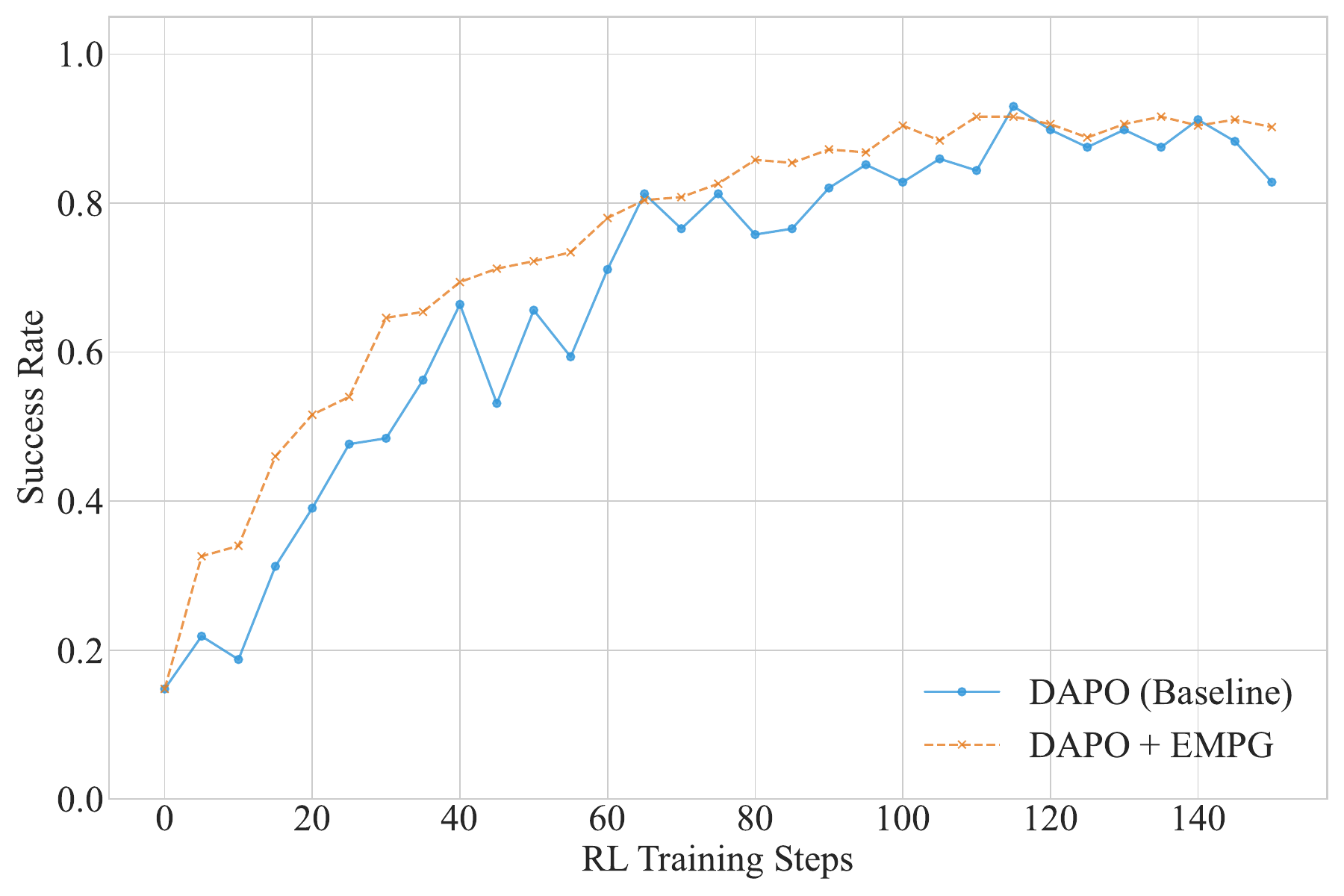}
        \caption{ALFWorld: DAPO}
        \label{fig:alfworld_dapo}
    \end{subfigure}
    
    \caption{
        % <-- Model name added and text polished
        Learning dynamics comparison for the Qwen2.5-7B-Instruct model on the WebShop and ALFWorld benchmarks (evaluated on the validation set). In all four scenarios, the EMPG-enhanced agents (orange, dashed) demonstrate a superior success rate compared to their respective baselines (blue, solid).
    }
    \label{fig:learning_curves}
\end{figure}

\section{Algorithm Implementation Details}
\label{sec:appendix_implementation}

We provide a PyTorch-style pseudocode implementation for the core logic of our method in Algorithm \ref{alg:empg_code_listings}. This function calculates the final modulated advantage, $A_{\text{final}}$, used for the policy update, as detailed in Section \ref{sec:approach}. The process consists of four main stages:

\begin{enumerate}
    \item \textbf{Step-Level Entropy Collection:} The function first iterates through the batch of trajectories to identify agent action steps (i.e., the ``assistant'' responses). For each step $t$, it computes the corresponding step-level entropy $H_t$ by averaging the policy's token-level entropies for that action.

    \item \textbf{Modulation Component Calculation:} All collected step entropies $\{H_t\}$ are normalized across the batch using min-max scaling to produce $\{H_{\text{norm}, t}\}$ (as per Eq. \ref{eq:h_norm}). These normalized values are then used to compute the two key components of our method: the self-calibrating scaling factor $g(H_t)$ (Eq. \ref{eq:g_func}) and the future clarity bonus term $g'(H_{t+1})$ (Eq. \ref{eq:g_prime_func}).

    \item \textbf{Advantage Modulation:} The function then applies these components to the original outcome-based advantage. For each step, the advantage is scaled by $g(H_t)$ and augmented by the future clarity bonus $\zeta \cdot g'(H_{t+1})$, yielding the modulated advantage $A_{\text{mod}}$ as defined in our main formula (Eq. \ref{eq:main_formula}).

    \item \textbf{Final Normalization:} Finally, to reduce variance and ensure stable training, the entire batch of resulting modulated advantages is normalized to have a mean of zero. This produces the final advantage $A_{\text{final}}$ (Eq. \ref{eq:adv_norm}) that is used to compute the policy gradient.
\end{enumerate}

\begin{algorithm}[t]
    \caption{PyTorch-Style Pseudocode for EMPG Advantage Calculation}
    \label{alg:empg_code_listings}
    
    % Wrap the listing in a 'spacing' environment for tighter lines
    \begin{spacing}{0.95} 
    \begin{lstlisting}[language=Python, style=compactstyle]
import numpy as np
import torch

def compute_empg_advantage(tokenizer, batch, k=1.0, k_f=1.0, zeta=0.1):
    """
    Args:
        tokenizer: The tokenizer for identifying response segments.
        batch: A data batch with 'responses', 'old_entropy', 'advantages'.
        k (float): Hyperparameter for self-calibrating gradient scaling.
        k_f (float): Hyperparameter for the future clarity bonus.
        zeta (float): Hyperparameter for the future clarity bonus.
    """
    # --- 1. First Pass: Collect Step-Level Entropies ---
    all_step_entropies = []
    # segments_to_modify stores {'sample_idx', 'start', 'end'} for each step
    segments_to_modify = [] 

    for i in range(batch.batch.batch_size[0]):
        # Find "assistant" segments, which correspond to agent steps.
        token_segments = process_token_sequences(
            batch.batch['responses'][i], 
            tokenizer.encode("<|im_start|>assistant\n"), 
            tokenizer.encode('<|im_end|>')
        )
        for start, end in token_segments:
            if start >= end: continue
            
            # Calculate the average token-level entropy for the step
            step_entropy = batch.batch['old_entropy'][i][start:end].mean().item()
            all_step_ entropies.append(step_entropy)
            segments_to_modify.append({'sample_idx': i, 'start': start, 'end': end})

    if not all_step_entropies: return

    # --- 2. Calculate Modulated Advantage Components ---
    H = np.array(all_step_entropies)
    
    # Batch-level entropy normalization (Eq. 12) with \epsilon = 1e-8
    min_H, max_H = np.min(H), np.max(H)
    H_norm = (H - min_H) / (max_H - min_H + 1e-8)

    # Self-calibrating gradient scaling g(H) (Eq. 10)
    g_H_unnormalized = np.exp(-k * H_norm)
    mean_g_H = np.mean(g_H_unnormalized)
    g_H = g_H_unnormalized / (mean_g_H + 1e-8)
    
    # Future clarity bonus f(H) (Eq. 11)
    f_H = np.exp(-k_f * H_norm)

    # Convert to tensors for PyTorch operations
    g_H = torch.tensor(g_H, device=batch.batch['advantages'].device, dtype=torch.float32)
    f_H = torch.tensor(f_H, device=batch.batch['advantages'].device, dtype=torch.float32)

    # --- 3. Second Pass: Apply Advantage Modulation (Eq. 8) ---
    step_advantages = []
    for i, segment in enumerate(segments_to_modify):
        idx, start, end = segment['sample_idx'], segment['start'], segment['end']
        
        # Apply self-calibrating gradient scaling
        batch.batch['advantages'][idx][start:end] *= g_H[i]
        
        # Add future clarity bonus if there is a next step
        next_seg = segments_to_modify[i+1] if i+1 < len(segments_to_modify) else None
        if next_seg and next_seg['sample_idx'] == idx:
            batch.batch['advantages'][idx][start:end] += zeta * f_H[i+1]
        step_advantages.append(batch.batch['advantages'][idx][start])
            
    # --- 4. Final Advantage Normalization (Eq. 7) ---
    if step_advantages:
        final_adv_mean = torch.mean(torch.stack(step_advantages))
        batch.batch['advantages'] -= final_adv_mean
\end{lstlisting}
    \end{spacing}
\end{algorithm}

\end{document}